\newtheorem{remark}{\bfseries Remark}
\newtheorem{problem}{\bfseries Problem}
\newtheorem{assumption}{Assumption}
\DeclareMathOperator*{\argmin}{arg\,min}
\DeclareMathOperator*{\KL}{D_{KL}}
\DeclareMathOperator*{\W2}{W_2^2}
\DeclareMathOperator*{\diag}{diag}
\DeclareMathOperator*{\Gaussian}{\mathcal{N}}
\DeclareMathOperator*{\pms}{ {\scriptstyle \pm}}
\theoremstyle{plain}
\newtheorem{theorem}{Theorem}[section]
\theoremstyle{definition}
\theoremstyle{remark}
\begin{document}
\bstctlcite{IEEEexample:BSTcontrol}

\title{Information-Geometric Barycenters for Bayesian Federated Learning\\}

\author{
    \IEEEauthorblockN{
        Nour Jamoussi\IEEEauthorrefmark{2}, 
        Giuseppe Serra\IEEEauthorrefmark{2}, 
        Photios A. Stavrou\IEEEauthorrefmark{2}, 
        Marios Kountouris\IEEEauthorrefmark{2}\IEEEauthorrefmark{3}
    }
    \IEEEauthorblockA{\IEEEauthorrefmark{2}{Communication Systems Department, EURECOM, France}}
    \IEEEauthorblockA{\IEEEauthorrefmark{3}{Andalusian Institute of Data Science and Computational Intelligence (DaSCI)}\\
    \IEEEauthorrefmark{0}{Department of Computer Science and Artificial Intelligence, University of Granada, Spain}}

}

\maketitle

\begin{abstract}
Federated learning (FL) is a widely used and impactful distributed optimization framework that achieves consensus by averaging locally trained models. While effective, this approach may not align well with Bayesian inference, where the model space is more naturally represented as a distribution space. Taking an information-geometric perspective, we reinterpret FL aggregation as the problem of finding the barycenter of local posteriors using a predefined divergence metric, minimizing the average discrepancy across clients. This perspective provides a unifying framework that generalizes many existing methods and offers crisp insights into their theoretical underpinnings. We then propose BA-BFL, an algorithm that retains the convergence properties of Federated Averaging in non-convex settings. In non-independent and identically distributed scenarios, we conduct extensive comparisons with statistical aggregation techniques, showing that BA-BFL achieves performance comparable to state-of-the-art methods while also providing a geometric interpretation of the aggregation phase. Additionally, we extend our analysis to Hybrid Bayesian Deep Learning, exploring the impact of Bayesian layers on uncertainty quantification and model calibration.
\end{abstract}

\begin{IEEEkeywords}
Bayesian Federated Learning, Hybrid Bayesian Deep Learning, Uncertainty Quantification, Model Aggregation.
\end{IEEEkeywords}

\begin{figure*}[!t]
    \centering
    \begin{minipage}{0.48\linewidth}
        \centering
        \includegraphics[width=\linewidth]{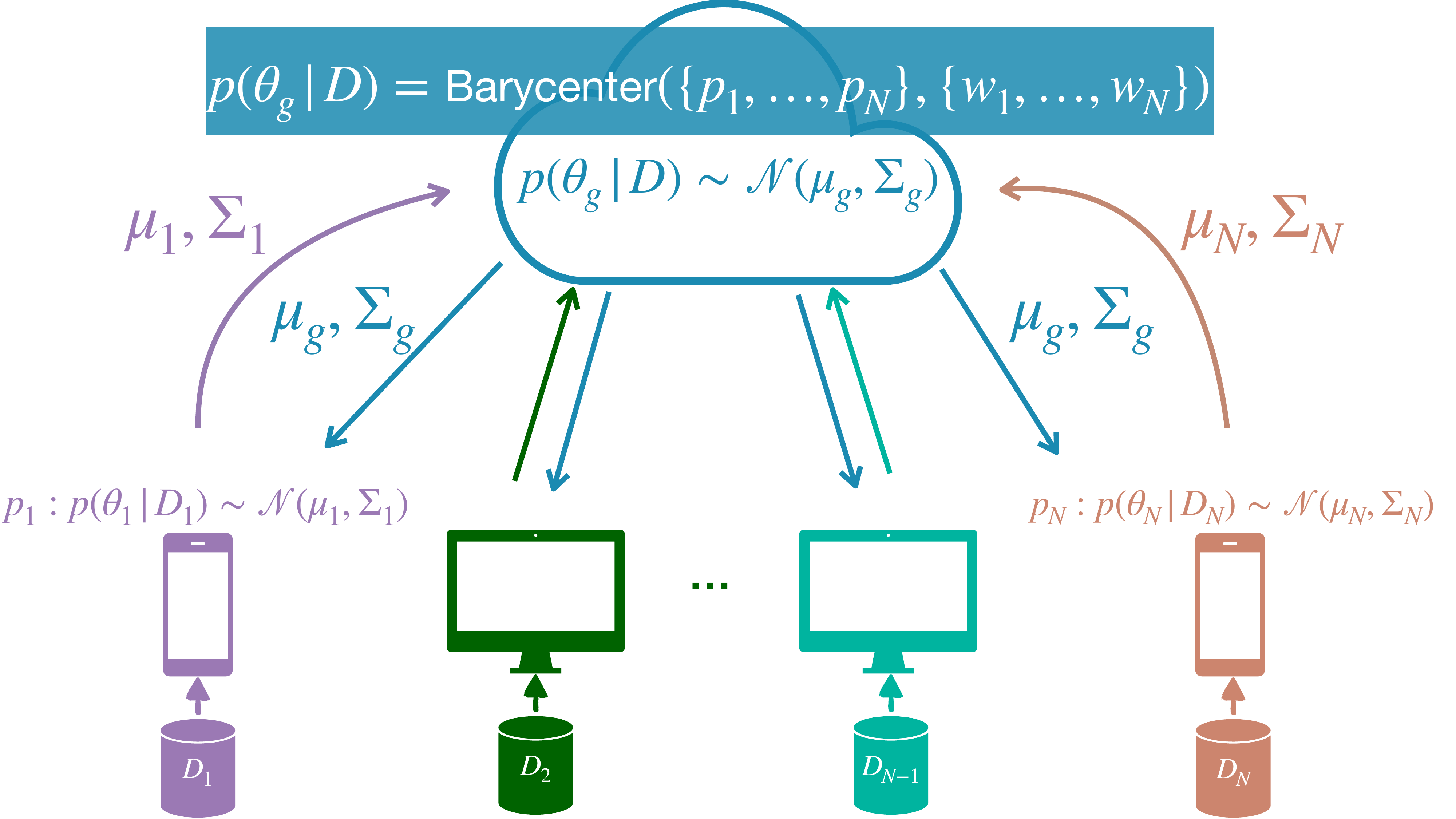}
        \caption{The BA-BFL framework.}
        \label{fig:BFL}
    \end{minipage}
    \hfill
    \begin{minipage}{0.48\linewidth}
        \centering
        \centering
    \resizebox{\columnwidth}{!}{
    \begin{tikzpicture}[
        every node/.style={draw, rounded corners, align=center, font=\small, minimum width=4cm, minimum height=1.5cm},
        node distance=0.3cm and 1.2cm,
        box1/.style={fill=blue!20},
        box2/.style={fill=purple!20},
        arrowstyle/.style={thick, ->, shorten >=2pt, shorten <=2pt}
    ]

    \node[box1] (arithmetic_methods) 
    {\parbox{4cm}{\centering \textbf{Deterministic Arithmetic Average Aggregations}\\ 
    \textit{FedAvg \cite{mcmahan2017communication}, \\FedProx \cite{li2022federated}, PropFair \cite{zhang2022proportional}, \\ GiFair \cite{yue2023gifair},  FedLaw \cite{li2023revisiting} }}};

    \node[box1, below=of arithmetic_methods] (multiplicative_methods) 
    {\parbox{4cm}{\centering \textbf{Multiplicative Aggregations of Posteriors}\\ 
    \textit{FedPA \cite{al2020federated}, FOLA \cite{liu2023bayesian}, \\ FedEP \cite{guo2023federated}, FedIvon \cite{pal2024simple}}}};

    \node[box2, right=of arithmetic_methods] (corresponding1) 
    {\parbox{4cm}{\centering \textbf{WB / RKLB \\ with variance $\to$ 0}}};

    \node[box2, right=of multiplicative_methods] (corresponding2) 
    {\parbox{4cm}{\centering \textbf{RKLB}}};

    \draw[arrowstyle] (arithmetic_methods.east) -- (corresponding1.west);
    \draw[arrowstyle] (multiplicative_methods.east) -- (corresponding2.west);

    \end{tikzpicture}
    }
    \caption{Mapping of various aggregation methods to their corresponding barycenter formulations.}
    \label{fig:aggregation_mapping}
    \end{minipage}

\end{figure*}

\section{Introduction}
Federated Learning (FL) has emerged as the de facto standard for decentralized learning, particularly in scenarios that demand strong privacy guarantees. As originally introduced in \cite{mcmahan2017communication}, an FL system consists of a central server that maintains a global model and interacts with multiple clients (end-user devices), each of which holds private local data. 
FL schemes typically operate in two phases. In the local learning phase, each client trains a model on its own private data. In the aggregation phase, the locally updated models are transmitted to the server and merged according to a predetermined rule. These phases repeat iteratively, with the global model from the previous iteration distributed to clients as the starting point for their local models in the current iteration.

Aggregation plays a central role in FL, allowing individual client contributions to be combined into a global model while preserving data privacy and ensuring communication efficiency.
Although various aggregation strategies have been proposed (e.g., \cite{qi2023model}), most aggregation methods rely on variants of weighted averaging. Notable examples include FedAvg \cite{mcmahan2017communication} and FedProx \cite{li2022federated}, which aggregate local models by computing a weighted average of their parameters. The choice of weights is an important design parameter, as it can encode auxiliary attributes such as the relative importance of each client’s model to the overall objective or reflect the amount of data available to each client.

A key challenge in FL, and generally in distributed learning systems, is the statistical heterogeneity among participating clients. In real-world scenarios, client datasets rarely satisfy the idealized assumption of independent and identically distributed (i.i.d.) data. Instead, they often exhibit significant heterogeneity and distributional shifts across clients. 
As reviewed in \cite{kairouz2021advances,li2022federated}, five major forms of heterogeneity are typically identified: \textit{label distribution skew}: differences in the frequency of specific labels across clients (e.g., under- or over-represented classes); \textit{feature distribution skew}: differences in the feature distributions associated with the same label; \textit{concept drift}: cases where the same label is associated with different feature distributions across clients; \textit{concept shift}: cases where identical samples receive different labels from different clients, and \textit{quantity skew}: differences in the number of data samples held by each client.
While these heterogeneities are of high practical relevance, addressing all of them simultaneously remains a significant challenge. As a result, most of the existing FL research is focused on only a subset of these challenges \cite{deng2020adaptive,t2020personalized,fallah2020personalized}.
Given the growing applicability of FL in real-world settings, uncertainty quantification and model calibration are central to building trustworthy and reliable models. Nonetheless, these aspects remain largely underexplored in existing research on deterministic FL. A preliminary study on the topic, with a specific application to healthcare, is presented in \cite{zhang2023uncertainty}. It provides an overview of various uncertainty quantification methods for deterministic FL, which were later implemented in \cite{koutsoubis2024privacy}. However, it is important to note that the techniques discussed are primarily inspired by Bayesian approaches, such as Bayesian ensembles and Monte Carlo dropout.

Bayesian learning excels at improving model reliability, as Bayesian methods enable more accurate uncertainty quantification and calibration, making them a compelling solution for FL in non-i.i.d. contexts. FedPPD \cite{bhatt2024federated} introduces an FL framework with built-in uncertainty quantification: in each round, each client estimates both the posterior distribution over its parameters and the posterior predictive distribution (PPD). The PPD is subsequently distilled into a single deep neural network, which is then sent to the server.
pFedBayes \cite{zhang2022personalized} and Fedpop \cite{kotelevskii2022fedpop} are also Bayesian approaches with a focus on uncertainty quantification aspects, proposed in the context of personalized Bayesian Federated Learning (BFL). Nonetheless, we emphasize that many of the existing methods \cite{zhang2022personalized,ozer2022combine,bhatt2024federated,fischer2024federated} rely on variations of weighted averaging of the posteriors' parameters.

\paragraph{Contributions}
This work introduces a unifying perspective on aggregation methods in BFL through the lens of barycentric aggregation (BA-BFL). Given a divergence metric, we interpret the aggregation process as a geometric problem, where the global model is identified as the barycenter of the local posteriors. Unlike existing methods that often rely on heuristic variations of parameter averaging, our approach is theoretically grounded: it explicitly minimizes the average divergence between the global posterior and the local posteriors. We show that this methodology generalizes several aggregation strategies previously proposed in the literature. Furthermore, the proposed methods preserve the convergence properties of FedAvg, even in non-convex settings (see Theorem \ref{th:convergence}). We evaluate our approach against state-of-the-art Bayesian aggregation methods, comparing accuracy and uncertainty quantification in heterogeneous settings. The results demonstrate that our method achieves performance comparable to existing statistical aggregation techniques. To bridge gaps in the BFL literature and buildinsights from Hybrid Bayesian Deep Learning (HBDL) \cite{zeng2018relevance}, we further examine how limiting the number of Bayesian layers affects the performance of different Bayesian aggregation methods.

\paragraph{Notation}
Table~\ref{tab:notation} summarizes the key symbols and their meanings for clarity and convenience of reference.

\begin{table}[h!]
\centering
\small
\renewcommand{\arraystretch}{0.8}
\begin{tabular}{ll}
\toprule
\textbf{Symbol} & \textbf{Meaning} \\
\midrule
\( \mathcal{D}_k \) & Local dataset of client \(k\) \\
\( \mathcal{D} \) & Union of all datasets \( \bigcup_{k=1}^N \mathcal{D}_k \) \\
\( D_\alpha \) & \(\alpha\)-divergence \\
\( \KL \) & Kullback–Leibler divergence \\
\( \W2 \) & Squared 2-Wasserstein distance \\
\( w_k \) & Weight of client \(k\) \\
\( f_k \) & Local objective of client \(k\) \\
\( f \) & Weighted sum \( \sum_{k=1}^N w_k f_k \) \\
\( \psi_k \) & Posterior parameters of client \(k\)'s model \\
\( \psi_g \) & Posterior parameters of the global model \\
\bottomrule
\end{tabular}
\caption{Summary of notation.}
\label{tab:notation}
\end{table}

\section{Background and Related Work}
\paragraph{Federated Learning}
An FL system \cite{mcmahan2017communication} consists of a central server and $N$ clients that engage in an iterative learning process through server-client communication. For each communication round, the $k^{th}$ client trains its local model, parameterized by $\theta_k$, on its private data $\mathcal{D}_k$. Subsequently, the model parameters $\theta_k$ are sent to the server, which aggregates them to obtain the global model. The updated global model is then distributed back to the clients to refine their local models in the next communication round. Through this process, FL aims to learn a global model $\theta^*$ on the aggregated dataset $\mathcal{D} = \bigcup_{k = 1}^N \mathcal{D}_k$ from all participating clients. 

In general, the objective function in FL takes the form
\begin{align}
    \min _{{\theta}} f({\theta}) = \sum_{k=1}^N w_k f_k({\theta}) 
\label{eq:fl_objective}
\end{align}
where $f_k({\theta})= \mathbb{E}_{(x,y) \sim \mathcal{D}_k}[\mathcal{L}(\theta ; (x,y))]$ is the local objective function of the $k^{th}$ client, and $w_k$ is its associated weight, with $\sum_{k = 1}^N w_k = 1$. At each communication round, minimizing $f_k(\theta)$ locally produces the client update $\theta_k$.

\begin{table*}[ht]
    \centering
    \renewcommand{\arraystretch}{1}

    \begin{tabular}{@{}lll@{}}
        \toprule
        \textbf{BFL Method} & \textbf{Local Bayesian Technique} & \textbf{Global Aggregation Method} \\ \midrule
        FedPA \cite{al2020federated} & MCMC sampling & Multiplicative aggregation of posteriors \\
        pFedBayes \cite{zhang2022personalized} & Variational Inference  & Weighted averaging of posteriors' statistics\\
        FVBA \cite{ozer2022combine} & Variational Inference & Weighted averaging of posteriors' statistics \\
        FedEP \cite{guo2023federated} & Variational Inference & Multiplicative aggregation of posteriors \\
        FedHB \cite{kim2023fedhb} & Variational Inference & Bayesian posterior inference  \\
        FOLA \cite{liu2023bayesian}  & Laplace Approximation  & Multiplicative aggregation of posteriors \\
         \textit{Not Named} \cite{fischer2024federated} &  Variational Inference \& MC dropout & Weighted averaging of posteriors' statistics \\
        FedPPD \cite{bhatt2024federated} & MCMC sampling & Weighted averaging of posteriors' statistics \\
        FedIvon \cite{pal2024simple} & IVON & Multiplicative aggregation of posteriors \\

 \bottomrule
    \end{tabular}
    \caption{Categorization of parametric client-side Bayesian Federated Learning methods.}
    \label{tab:bfl_methods}
\end{table*}

\paragraph{Hybrid Bayesian Deep Learning (HBDL)} 
Despite its remarkable performance, deep learning does not address crucial challenges in realistic scenarios, such as reliability and uncertainty quantification. In a recent position paper \cite{papamarkou2024position}, the authors propose Bayesian Deep Learning as a solution to the ethical, privacy, and safety challenges of modern deep learning. Acknowledging ongoing challenges with Bayesian DL, such as the additional computational cost of applying Bayesian methods to large-scale deep models, the authors envision the alternative framework of HBDL to preserve the efficiency and lower complexity of deep learning while retaining the reliability of Bayesian DL. 
HBDL is also discussed in \cite{jospin2022hands} as \textit{Bayesian inference applied only to the last (or last few) layers}. The core idea is to replace some of the layers in a Bayesian deep model with deterministic ones, thereby making the model closer to its classical deep learning counterpart. This partial Bayesian formulation makes it possible to retain uncertainty quantification capabilities while reducing complexity relative to a fully Bayesian model.

\paragraph{Bayesian Federated Learning}
BFL aims to incorporate the strengths of Bayesian deep learning into the FL framework and to provide a potential solution to the challenges outlined above. In this work, we focus primarily on parametric, client-side BFL methods \cite{cao2023Bayesian}. We categorize these methods based on their strategies for global model construction. Table \ref{tab:bfl_methods} provides a summary of this discussion.

\begin{itemize}
\item \textit{Multiplicative Aggregation of Posteriors}:
FedPA \cite{al2020federated} employs Stochastic Gradient Markov Chain Monte Carlo (MCMC) for local posterior inference, aggregating client posteriors through a product of Gaussian distributions. FOLA \cite{liu2023bayesian} approximates local posteriors using Laplace approximation. A multivariate Gaussian product mechanism is used for global posterior construction, while prior distributions derived from the global posterior guide local training, thereby enabling a continual learning setting. FedEP \cite{guo2023federated} frames FL as a distributed variational inference problem, aligning the global posterior with local posteriors through multiplicative aggregation. FedIvon \cite{pal2024simple} adopts Improved Variational Online Newton (IVON) \cite{shen2024variational} to approximate local posteriors as Gaussians with diagonal covariance, efficiently updating both mean and variance with second-order information. The server aggregates these local posteriors via the weighted product of the local Gaussians. It is important to note that approximating the global posterior as the weighted product of local posteriors is equivalent to computing their geometric mean, which also corresponds to the RKLB aggregation method introduced in this work, offering a geometric interpretation of this aggregation strategy.

\item \textit{Weighted Averaging of Posteriors' Statistics}: pFedBayes \cite{zhang2022personalized} employs variational inference to incorporate uncertainty into model parameters. From a continual learning perspective, it minimizes the KL divergence between global and local posterior distributions, balancing local reconstruction error with global alignment, which the paper presents mainly as a personalization aspect. FVBA \cite{ozer2022combine} investigates aggregating variational Bayesian neural networks using five statistical aggregation schemes. In \cite{fischer2024federated}, the authors integrate Bayesian deep learning with FL, employing variational inference and Monte Carlo Dropout for inference in local models. As in \cite{ozer2022combine}, different statistical aggregations are evaluated, highlighting the importance of the aggregation method chosen for model performance. FedPPD \cite{bhatt2024federated} leverages MCMC sampling for local posterior inference and distills posterior predictive distributions into individual deep neural networks via Stochastic Gradient Langevin Dynamics. It adopts either simple averaging or a distillation-based global aggregation approach.

\item \textit{Bayesian Posterior Inference}:
FedHB \cite{kim2023fedhb} introduces a hierarchical Bayesian framework in which local model parameters are governed by a global latent variable. Variational inference is used to optimize the local and global posteriors through block-coordinate optimization.

\end{itemize}

\section{Proposed Method} \label{sec:prop_method}
In this section, we introduce our problem formulation and present the main theoretical results. We start by formalizing the key technical aspects of the client-side BFL framework.

\paragraph{Client-Side BFL}
The Bayesian view presents a different framework for FL. 
The goal is to estimate the posterior distribution of the global model's parameters, $ p(\theta^* | \mathcal{D}) $, given the posterior distributions of local models $p(\theta_k | \mathcal{D}_k)$. 

Nevertheless, exact posterior inference is usually intractable, requiring the use of approximate inference methods instead. In this work, we consider variational inference \cite{jordan1999introduction,blei2017variational} to approximate the local posteriors given a common prior distribution $p(\theta)$ and the client likelihoods $p(\mathcal{D}_k | \theta_k)$. 

For a parametric family $\mathcal{M}_{\Psi}$ parametrized by $\psi \in \Psi$, the optimization problem seeks to identify the distribution $q_{\psi} \in \mathcal{M}_{\Psi}$ that minimizes the KL divergence from the posterior distribution $p(\theta|\mathcal{D})$, i.e., 
\begin{align}
   \min_{\psi \in \Psi} \KL(q_{\psi}(\theta) \| p(\theta | \mathcal{D}))   \label{eq: primal_VI}
\end{align}
However, the minimization in \eqref{eq: primal_VI} is not directly tractable and is commonly approached through the derivation of the \textit{Negative Evidence Lower Bound} surrogate objective $\min _{\psi \in \Psi} \mathcal{L}(\psi, D)$, where
\begin{equation}    
\label{eq:elbo}
\mathcal{L}(\psi, D) = -\mathbb{E}_{q_{\psi}(\theta)}[\log p(\mathcal{D}|\theta)]+\KL(q_{\psi}(\theta) || p(\theta)).
\end{equation}

The local models are trained by minimizing \eqref{eq:elbo} to achieve their models' posterior distributions $p(\theta_k|\mathcal{D}_k),~ \forall k \in \{1, .., N\}$. The local posteriors are then aggregated in order to get the global model's posterior $p(\theta^*|\mathcal{D})$. Given this setting, we now introduce our main assumptions regarding the common prior $p(\theta)$ and the parametric family $\mathcal{M}$, which will stay valid throughout the rest of this paper.

\begin{assumption}
    For each client, we assume the prior distribution $p(\theta)$ to be a $d$-dimensional Gaussian with independent marginals, parameterized by a zero mean vector $\mathbf{0}_d$ and an identity covariance matrix $\mathbf{I}_d$.
\end{assumption}

\begin{assumption} (Mean-field Model)
    The parametric family $\mathcal{M}_{\Psi}$ is composed of $d$-dimensional Gaussian distributions with independent marginals, i.e., $q_{\psi} = \mathcal{N}(\mu, \Sigma)$, with mean $\mu \in \mathbb{R}^d$ and diagonal covariance $\Sigma = \diag(\sigma^2_1,\ldots,\sigma^2_d)$.
\label{ass:gaussian_ind}
\end{assumption}

\begin{algorithm}[t]
\caption{\texttt{BA-BFL}: Barycentric Aggregation for Bayesian  Federated Learning}
\label{alg:BABFL}
\begin{algorithmic}[1]
\item[] \textbf{Server's Input:} number of communication rounds $R$, aggregation weights $\{w_i\}_{i=1}^N$,  global distribution's initial parameters $\psi_g^0$. 
\item[] \textbf{Client's $k$ input:} number of local training epochs $T$, local training set $\mathcal{D}_k$
\item[] 
\FOR{each round $r = 1, \dots, R$}
    \STATE Sample clients' subset $\mathcal{S}_r \subset \{1, \dots, N\}$
    \STATE Communicate $\psi_g^{r-1}$ to all clients $k \in \mathcal{S}_r$
    \FOR{\textbf{client $k \in \mathcal{S}_r$}}
        \item[] \texttt{/$^*$ $T$ epochs of Gradient Descent (GD) starting at $\psi_g^{r-1}$$^*$/}
        \STATE $\psi_k^r \gets
        \textbf{GD}( \mathcal{L}(\cdot, \mathcal{D}_k), \psi_g^{r-1}). $ ~~~~ \texttt{// Eq.\ref{eq:elbo}}
    \ENDFOR
    \item[] \texttt{/$^*$Aggregation and global update$^*$/}
    \item $\psi_g^r \gets  \textbf{D-Barycenter}(\{\psi^r_k\}_{k=1}^N, \{w_k\}_{k=1}^N)$ 
    \ENDFOR
\end{algorithmic}
\end{algorithm}

\paragraph{Bayesian Aggregation as Posteriors Barycenter}
The main novelty of this work stands in the introduction of the general Barycentric Aggregation framework for BFL (BA-BFL), an aggregation method inspired by the geometric properties of the manifold to which the local posteriors $\{ p(\theta_k|\mathcal{D}_k)\}_{k = 1 \ldots N}$ belong. Given a divergence metric $D$, we propose as a global model the barycenter $p^*_D$ of the set of clients' posteriors, i.e., the distribution that minimizes the weighted divergence from a given set. The following problem formalizes this interpretation of the aggregation process. 

\begin{problem} \label{problem:barycenter} ($D$-barycenter)
Given a statistical manifold $\mathcal{M}$, a divergence function $D: \mathcal{M} \times \mathcal{M} \to [0,\infty)$, and a set of distributions $\mathcal{S} = \{p_k\}_{k = 1 \ldots N} \subseteq \mathcal{M}$ with associated normalized weights $\{w_k\}_{k = 1 \ldots N}$, i.e., $\sum_{k = 1}^N w_k = 1$, the barycenter $p^*_D$ of the set $\mathcal{S}$ is defined as:
    \begin{align}
        p^*_D = \argmin_{q \in \mathcal{M}} \sum_{k = 1}^N w_k D(p_k|| q). \label{eq:barycenter1}
    \end{align}
\end{problem}
We now study Problem \ref{problem:barycenter} under various assumptions on the distribution set $\mathcal{S}$ and the divergence metric $D$. First, we consider the general case where $D = D_{\alpha}$, namely, the divergence belongs to the family of $\alpha$-divergences for \( \alpha \in \mathbb{R} \setminus \{ 0 \} \), without any additional assumptions on $\mathcal{S}$. As shown in~\cite{cooke1991experts,koliander2022fusion}, the corresponding barycenter \( p^*_{D_{\alpha}} \) takes the following form:
\begin{align}
    p^*_{D_{\alpha}} = \frac{\left(\sum_{k = 1}^N w_k p_k^{\alpha}\right)^{\frac{1}{\alpha}}}{\int \left(\sum_{k = 1}^N w_k p_k^{\alpha} \right)^{\frac{1}{\alpha}} d\nu}. \label{eq:barycenter:alpha}
\end{align}

Moreover, by taking the limit \( \alpha \to 0 \), i.e., corresponding to the \emph{reverse Kullback–Leibler (RKL) divergence} \( D_{RKL}(p \| q) \), the barycenter \( p^*_{RKL} \) takes the form:
\begin{align}
    p^*_{RKL} = \frac{\prod_{k = 1}^N p_k^{w_k}}{\int \prod_{k = 1}^N p_k^{w_k} d\nu}. \label{eq:barycenter:KL}
\end{align}

We now focus on the case where all $p_k \in \mathcal{S}$ are $d$-dimensional Gaussian distributions, i.e., $p_k = \Gaussian(\mu_k, \Sigma_k)$, with mean $\mu_k$ and covariance matrix $\Sigma_k$. This setting derives from Assumption \ref{ass:gaussian_ind}, where we assume that the parameters of each Bayesian layer are Gaussian distributed. For the same reasons, we are also interested in the cases where the resulting barycenter is itself Gaussian, to enforce that global and local models belong to the same family of distributions. Alas, this is not the case for the majority of $\alpha$-divergences, as discussed in the following remark.

\begin{remark} (On the $\alpha$-barycenter of a set of Gaussians) Given $\mathcal{S} = \{ \Gaussian(\mu_k,\Sigma_k) \}_{k = 1\ldots N}$, the barycenter distribution $p^*_{D_\alpha}$ in \eqref{eq:barycenter:alpha} is not Gaussian. In fact, $(p^*_{D_\alpha})^{\alpha} \propto \sum_{k = 1}^N w_k p_k^{\alpha}$, showing that the resulting barycenter is related to the Gaussian mixture obtained from the weighted sum of the elements of $\mathcal{S}$. On the other hand, $p^*_{RKL}$ is still Gaussian since the Gaussian family is closed under the product operation and \eqref{eq:barycenter:KL} is the normalized product of unnormalized Gaussians.    
\end{remark}

In light of the above technical remark, among the considered $\alpha$-divergences we focus exclusively on the case of $\alpha \to 0$, i.e., $p_{RKL}^*$, as the barycenter naturally belongs to the Gaussian family, leaving the study of other $\alpha$-divergences as future work. Given $\mathcal{S} = \{ \Gaussian(\mu_k,\Sigma_k) \}_{k = 1\ldots N}$, the RKL barycenter is $p_{RKL}^* = \Gaussian(\mu_{RKL}, \Sigma_{RKL})$, where
\begin{align}
\Sigma_{RKL} = \left(\sum_{k = 1}^N w_k \Sigma_k^{-1} \right)^{-1}, ~
\mu_{RKL} = \Sigma_{RKL} \sum_{k = 1}^N w_k \Sigma_k^{-1} \mu_k.  \label{eq:barycenter:rKL}
\end{align}  
This result is well-established in the literature and has been derived using various approaches (e.g., see \cite{Battistelli:14}).

Similarly to the $D_{RKL}$ divergence, the barycenter of a set of Gaussians in the Wasserstein-2 distance belongs to the same family. In the general setting, the parameters of the barycenter are obtained using a set of fixed-point equations \cite{Agueh:11}. However, when the set of covariance matrices $\{\Sigma_k\}_{k=1 \ldots N}$ consists of diagonal matrices, i.e., $\Sigma_k = \diag(\sigma^2_{k,1},\ldots,\sigma^2_{k,d})$, analytic expressions for the barycenter statistics can be derived, as shown in \cite{Agueh:11}:
    \begin{align}
        \Sigma_{\W2} = \left(\sum_{k = 1}^N w_k \Sigma_k^{\frac{1}{2}} \right)^2 ,\qquad
        \mu_{\W2} = \sum_{k = 1}^N w_k \mu_k. \label{eq:barycenter:W2}
    \end{align}

In the sequel, we refer to the aggregation methods resulting from \eqref{eq:barycenter:rKL} and \eqref{eq:barycenter:W2} with the acronyms RKLB and WB, respectively. We discuss the applicability of the proposed methods in HBDL, focusing on cases where part of the architecture is deterministic. In such setting, the posterior distribution $p(\theta_{k,i}|\mathcal{D})$ for the $i^{th}$ layer of the $k^{th}$ client is constrained to be a point-mass located at $\mu_{k,i}$, i.e., $p(\theta_{k,i}|\mathcal{D}) = \delta_{(\theta_{k,i} = \mu_{k,i})}$ where $\delta_{x}$ is the Dirac distribution. We investigate the behavior of the proposed methods considering the posterior $p(\theta_{k,i}|\mathcal{D}) = \Gaussian(\mu_{k,i}, \epsilon)$ in the limit case of $\epsilon \to 0$. Both \eqref{eq:barycenter:rKL} and \eqref{eq:barycenter:W2} can be shown to be well-defined in the limit, resulting in the barycenter distribution $p^*(\theta_i | \mathcal{D}) = \delta_{\left(\theta_i = \sum_{k = 1}^N w_k \mu_{k,i} \right)}$. Notably, this coincides with the arithmetic mean aggregation commonly used in deterministic FL, creating a seamless connection between deterministic and probabilistic aggregation approaches within our framework.

Compared to other state-of-the-art methodologies in parametric client-side BFL - the primary focus of this work - our barycentric perspective extends the widely used weighted multiplication of posteriors, a predominant aggregation method in the literature. We demonstrate that this approach coincides with the RKLB, thereby reinforcing its theoretical foundation. 
The other baseline used in our comparative studies in Section \ref{sec:experiments}, is the arithmetic mean of the local posteriors' statistics. In addition, we consider other statistical aggregation methods that, while explored in some comparative studies, have yet to be adopted in practical applications. More broadly, on the server side, BFL can leverage Bayesian ensembles \cite{chen2020fedbe}, also referred to as Bayesian Model Averaging (BMA), which combines predictions from sampled models to produce a more robust global estimate. Notably, BMA can also be interpreted through the lens of KL barycenters, as highlighted in [Proposition 1.5, \cite{backhoff2022bayesian}]. This barycentric perspective not only provides a strong theoretical grounding for existing methods, as illustrated in Figure \ref{fig:aggregation_mapping}, but also opens the door to exploring alternative divergence measures to enhance the aggregation process, including the WB.

To conclude this section, we provide theoretical guarantees for the convergence of BA-BFL under both WB or RKLB aggregation, as stated in the following theorem.
\begin{theorem} (Convergence) \label{th:convergence}
Under Assumption \ref{ass:gaussian_ind}, and using either RKLB or WB aggregation, BA-BFL inherits and preserves the convergence properties of FedAvg, as shown in \cite{karimireddy2020scaffold}, for non-convex scenarios with both i.i.d. and non-i.i.d. data. 
\end{theorem}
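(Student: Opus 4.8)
The plan is to reduce BA-BFL to the classical FedAvg template analyzed in \cite{karimireddy2020scaffold} by exhibiting, for each aggregation rule, an invertible reparameterization of the Gaussian parameters under which the $D$-barycenter collapses to a weighted arithmetic mean. Under Assumption \ref{ass:gaussian_ind} every posterior is a diagonal Gaussian, so I would treat each coordinate $i \in \{1,\dots,d\}$ independently and work with the scalar pairs $(\mu_{k,i}, \sigma_{k,i}^2)$.

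First I would linearize each aggregation. For WB, define $\phi_{k,i} = (\mu_{k,i}, \sigma_{k,i})$; since $\Sigma_k^{1/2} = \diag(\sigma_{k,1},\dots,\sigma_{k,d})$, equation \eqref{eq:barycenter:W2} gives $\mu_{\W2,i} = \sum_k w_k \mu_{k,i}$ and $\sigma_{\W2,i} = \sum_k w_k \sigma_{k,i}$, i.e.\ $\phi_{g,i} = \sum_k w_k \phi_{k,i}$. For RKLB, define the natural parameters $\phi_{k,i} = (\mu_{k,i}/\sigma_{k,i}^2,\, 1/\sigma_{k,i}^2)$; then \eqref{eq:barycenter:rKL} reads $1/\sigma_{g,i}^2 = \sum_k w_k/\sigma_{k,i}^2$ and $\mu_{g,i}/\sigma_{g,i}^2 = \sum_k w_k \mu_{k,i}/\sigma_{k,i}^2$, so again $\phi_{g,i} = \sum_k w_k \phi_{k,i}$. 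In both cases $\phi = g(\psi)$ is a smooth bijection of the admissible domain $\{\sigma_i > 0\}$ onto its image.

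Next I would recast the whole procedure in the $\phi$-coordinates. Writing the local objective as $\tilde f_k(\phi) = f_k(g^{-1}(\phi))$ (equivalently, reparameterizing the ELBO \eqref{eq:elbo}), the local optimization is ordinary (stochastic) gradient descent on $\tilde f_k$, and, by the previous step, the server aggregation is exactly the weighted arithmetic mean $\phi_g^r = \sum_k w_k \phi_k^r$. Hence, in these coordinates, BA-BFL is syntactically identical to FedAvg, and the guarantee of \cite{karimireddy2020scaffold} applies as a black box, yielding convergence to a stationary point of $\tilde f = \sum_k w_k \tilde f_k$ at the FedAvg rate in both the i.i.d.\ and non-i.i.d.\ regimes. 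Because $g$ is a diffeomorphism, its Jacobian is invertible, so stationary points of $\tilde f$ correspond exactly to stationary points of the original federated objective.

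The main obstacle is verifying that the regularity hypotheses required by the cited analysis — $L$-smoothness of each local objective, bounded stochastic-gradient variance, and bounded inter-client gradient dissimilarity — survive the reparameterization. The map $g^{-1}$ is nonlinear: for WB it squares the standard deviation, and for RKLB it inverts the precision, so its Jacobian and Hessian degrade as $\sigma_i \to 0$ (RKLB) or as $\sigma_i \to \infty$ (WB). I would therefore restrict attention to a compact region of parameter space bounded away from the degenerate covariances — a mild assumption also implicit in FedAvg-type analyses — on which $g^{-1}$ is Lipschitz with Lipschitz gradient; the chain rule then transfers $L$-smoothness and the dissimilarity bounds to $\tilde f_k$ with modified constants, after which the invocation of \cite{karimireddy2020scaffold} is immediate. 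The HBDL limit $\epsilon \to 0$ is the only place where this boundedness must be handled with extra care, but there both barycenters reduce to the plain arithmetic mean, for which the reduction is trivial.
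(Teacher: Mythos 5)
Your proposal is correct and follows essentially the same route as the paper: both arguments exhibit the invertible coordinate changes $(\mu,\sigma^2)\mapsto(\mu/\sigma^2,1/\sigma^2)$ for RKLB and $(\mu,\sigma^2)\mapsto(\mu,\sigma)$ for WB, under which each barycenter becomes a weighted arithmetic mean, and then invoke the FedAvg guarantee of \cite{karimireddy2020scaffold} as a black box. Your additional paragraph verifying that $L$-smoothness and the dissimilarity bounds survive the (nonlinear) reparameterization on a region bounded away from degenerate covariances is a point the paper leaves implicit, and is a worthwhile strengthening rather than a departure.
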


\begin{proof}
The proof of convergence of BA-BFL is based on existing results on the convergence proof of FedAvg in the non-i.i.d. setup. This connection is possible by recognizing that BA-BFL can be seen as an instance of FedAvg on the parameter space of the chosen parametric family of distributions, subject to a bijective transformation.\\
Considering a parametric family $\mathcal{M}_{\Psi}$ parametrized by $\psi \in \Psi$, let $F: \Psi \to \hat{\Psi}$ be an invertible mapping. Then, at the $i^{th}$ user, there is no difference between optimizing the local objective on $\Psi$ or a modified version optimized on $\hat{\Psi}$ via the inverse of $F$, i.e.,
\begin{align}  
    \min_{\psi \in \Psi} f_i(\psi) &= \min_{\hat{\psi} \in \hat{\Psi}} f_i(F^{-1}(\hat{\psi})) \label{th:conv:eq1} \\
    \quad \text{where} \quad f_i(\psi) &= -\mathbb{E}_{q_{\psi}(\theta)}[\log p(\mathcal{D}|\theta)]+\KL(q_{\psi}(\theta) || p(\theta)). \nonumber 
\end{align}
We are interested in the case where there exists $F$ such that the barycentric aggregation on $\Psi$ induced by a divergence $D$ is equivalent to an arithmetic mean on $\hat{\Psi}$, i.e.,
\begin{align}
    \psi_g = BA(\{\psi\}_{i = 1}^{N}) = F^{-1} \left( \sum_{i=1}^N w_i F(\psi_i) \right). \label{th:conv:eq2}
\end{align}
Under the condition that such mapping exists, then the optimization dynamics of BA-BFL on $\Psi$ is equivalent to FedAvg on $\hat{\Psi}$, i.e., \textit{BA-BFL inherits the same convergence properties of FedAvg}\footnote{For a detailed convergence proof of FedAvg in the non-i.i.d. scenario, we refer the reader to \cite{karimireddy2020scaffold}}.

Considering the family of $d$-dimensional mean-field Gaussian distributions, i.e., with diagonal covariance matrix $\Sigma = \diag(\{\sigma^2_k\}_{k = 1}^{d})$, it can be parametrized by $\psi = [(\mu_1, \sigma^2_1),\ldots, (\mu_d, \sigma^2_d)]$ with $\mu_k \in \mathbb{R}$ and $\sigma_k^2 \in \mathbb{R}^+$. Then, we can define the set of entry-wise invertible functions
\begin{align*}
    F_{RKL} (\mu_k, \sigma^2_k) = \left( \frac{\mu_k}{\sigma^2_k}, \frac{1}{\sigma_k^2} \right),  &~~F^{-1}_{RKL}(\nu_k, \psi_k) = F_{RKL}(\nu_k,\psi_k) \\
    F_{\W2} (\mu_k, \sigma^2_k) = \left( \mu_k, \sqrt{\sigma^2_k} \right),  &~~ F^{-1}_{\W2}(\nu_k, \psi_k) = (\nu_k,\psi^2_k)
\end{align*}
which satisfy \eqref{th:conv:eq2} respectively for RKL and $\W2$ divergences. Therefore, given the previous result, BA-BFL under either RKL or $\W2$ divergences enjoys the same convergence properties of FedAvg, thus concluding the proof.    
\end{proof}

\section{Experiments}
\label{sec:experiments}

We devote this section to the experimental investigation of the proposed BA-BFL. 
To this end, we conduct experimental studies on the FashionMNIST \cite{xiao2017fashion}, SVHN \cite{netzer2011reading} and CIFAR-10 \cite{Krizhevsky09learningmultiple} datasets, within a heterogeneous client setting. The datasets used exhibit varying levels of difficulty. 

To compare the proposed methodologies, we consider the following baselines:
\begin{itemize}
    \item for deterministic FL, FedAvg \cite{mcmahan2017communication} aggregates the parameters of the clients' models through arithmetic weighted average, i.e., $ \theta^* = \sum_{k=1}^N  w_k \theta_k$. 
    \item for BFL, \cite{ozer2022combine,fischer2024federated} propose different possible statistical aggregation methods detailed below:  
    \begin{itemize}
        \item Empirical Arithmetic Aggregation (EAA), 
        \[
        \mu_{EAA} = \sum_{k=1}^{N} w_k \mu_k, \quad \sigma^2_{EAA} = \sum_{k=1}^{N} w_k  \sigma^2_k.
        \]
        \item Gaussian Arithmetic Aggregation (GAA),
        \[
        \mu_{GAA} = \sum_{k=1}^{N} w_k \mu_k, \quad \sigma^2_{EAA} = \sum_{k=1}^{N} w_k^2  \sigma^2_k.
        \]  
        \item Arithmetic Aggregation with Log Variance (AALV), 
         $$
         \mu_{A A L V}=\sum_{k=1}^N w_k \mu_k, \quad
         \sigma_{A A L V}^2=e^{\sum_{k=1}^N w_k \log \sigma_k^2}.
        $$
    \end{itemize}
\end{itemize}

\paragraph{Metrics}
We evaluate the performance of the considered FL algorithms based on three criteria: accuracy, uncertainty quantification, where lower NLL indicates a better model fit, and model calibration, where lower ECE indicates better alignment between predicted probabilities and actual outcomes.

\paragraph{Experimental Setup}
To induce label shifts among the $10$ clients participating in the FL scheme, we partition the samples of each label across the clients using a Dirichlet distribution as suggested in \cite{li2020practical,yurochkin2019Bayesian,wang2020federated,wang2020tackling,lin2020ensemble,ozer2022combine}.

We assign the aggregation weights to reflect the importance of each client in proportion to the volume of data locally owned, i.e., $w_k = \tfrac{|\mathcal{D}_k|}{|\mathcal{D}|}$ where $|\cdot|$ indicates the number of samples in the dataset.

The architecture of the global and local models consists of two convolutional layers and three fully connected layers. Following an HBDL approach, we implement the last $n = 0, 1,2,3$ layers as Bayesian fully connected layers, whereas the remaining layers are deterministic. In our comparative study, increasing $n$ allows measuring the impact of additional Bayesian layers on the uncertainty quantification, model calibration, and the cost-effectiveness of the FL algorithm in time.   
\begin{table}[t]
    \centering
    \setlength{\tabcolsep}{1mm}
    \fontsize{9}{11}\selectfont
    \caption{Accuracy of the global models resulting from FedAvg, BA-BFL with RKL and $\W2$ barycentric aggregation (RKLB, WB), and BFL baseline aggregation methods (AALV, EAA, GAA). The methods are grouped based on the number of Bayesian layers (Nbl) used in the model architecture. The evaluation is conducted across three datasets (FashionMNIST, CIFAR-10, and SVHN). Bold values represent the highest performance for each dataset, while underlined values  denote the best result for the specific dataset within each group of rows corresponding to a specific Nbl.}
    \begin{tabular}{@{}c|c|c|c|c@{}}
        \toprule
        \textbf{Nbl} & \textbf{Algorithm} & \textbf{FashionMNIST} & \textbf{SVHN} & \textbf{CIFAR-10} \\ 
        \midrule
        \multirow{ 1}{*}{\textbf{0}} & \textbf{FedAvg} & 87.88 $\pm$ 0.79 & 86.06 $\pm$ 0.45 & 61.63 $\pm$ 3.11 \\ 
        \midrule
        \multirow{5}{*}{\textbf{1}} & \textbf{AALV} & 88.22 $\pm$ 0.34 & 86.52 $\pm$ 0.29 & 63.42 $\pm$ 3.02 \\ 
        & \textbf{EAA} & 88.07 $\pm$ 0.22 & 86.24 $\pm$ 0.20 & 63.69 $\pm$ 2.47 \\ 
        & \textbf{GAA} & 88.15 $\pm$ 0.31 & 86.36 $\pm$ 0.28 & 63.66 $\pm$ 2.22 \\ 
        & \textbf{RKLB} & 88.07 $\pm$ 0.36 & 86.26 $\pm$ 0.26 & 63.37 $\pm$ 2.62 \\ 
        & \textbf{WB} & \underline{\textbf{88.34 $\pm$ 0.30}} & \underline{\textbf{86.55 $\pm$ 0.37}} & \underline{63.91 $\pm$ 2.64} \\ 
        \midrule
        \multirow{5}{*}{\textbf{2}} & \textbf{AALV} & 87.62 $\pm$ 0.45 & 85.46 $\pm$ 0.10 & 65.03 $\pm$ 2.92 \\ 
        & \textbf{EAA} & 87.53 $\pm$ 0.57 & \underline{85.64 $\pm$ 0.33} & 64.02 $\pm$ 1.99 \\ 
        & \textbf{GAA} & \underline{87.82 $\pm$ 0.64} & 85.54 $\pm$ 0.44 & 64.59 $\pm$ 3.51 \\ 
        & \textbf{RKLB} & 87.59 $\pm$ 0.57 & 85.57 $\pm$ 0.45 & \underline{\textbf{65.20 $\pm$ 3.99}} \\ 
        & \textbf{WB} & 87.69 $\pm$ 0.74 & 85.57 $\pm$ 0.51 & 64.74 $\pm$ 3.29 \\ 
        \midrule
        \multirow{5}{*}{\textbf{3}} & \textbf{AALV} & \underline{88.07 $\pm$ 0.58} & 86.15 $\pm$ 0.80 & 63.71 $\pm$ 3.63 \\ 
        & \textbf{EAA} & 87.81 $\pm$ 0.54 & 86.04 $\pm$ 0.62 & 64.45 $\pm$ 1.79 \\ 
        & \textbf{GAA} & 88.02 $\pm$ 0.55 & 86.27 $\pm$ 1.02 & 64.40 $\pm$ 2.30 \\ 
        & \textbf{RKLB} & 87.77 $\pm$ 0.80 & \underline{86.53 $\pm$ 1.03} & \underline{64.55 $\pm$ 2.97} \\ 
        & \textbf{WB} & 87.54 $\pm$ 0.54 & 85.99 $\pm$ 0.68 & 64.30 $\pm$ 2.55 \\ 
        \bottomrule

    \end{tabular}
  \label{tab:acc}
\end{table}

\paragraph{Overview of the Results}
Table 1 summarizes the accuracy performance of BA-BFL using RKLB and WB, alongside the considered baseline methods. All aggregation methods are evaluated under the same model architecture, with an equal number of Bayesian layers. The results show that the proposed aggregation methods achieve performance comparable to the baselines in most scenarios. Furthermore, all Bayesian methods consistently outperform FedAvg across all evaluated datasets.
Notably, the best accuracy is obtained with a single Bayesian layer for FashionMNIST and SVHN, and with two Bayesian layers for CIFAR-10. Interestingly, increasing the number of Bayesian layers does not always result in improved accuracy.

\begin{figure*}[ht]
    \centering

    \begin{minipage}[t]{0.18\linewidth}
        \centering
        \includegraphics[width=0.98\linewidth]{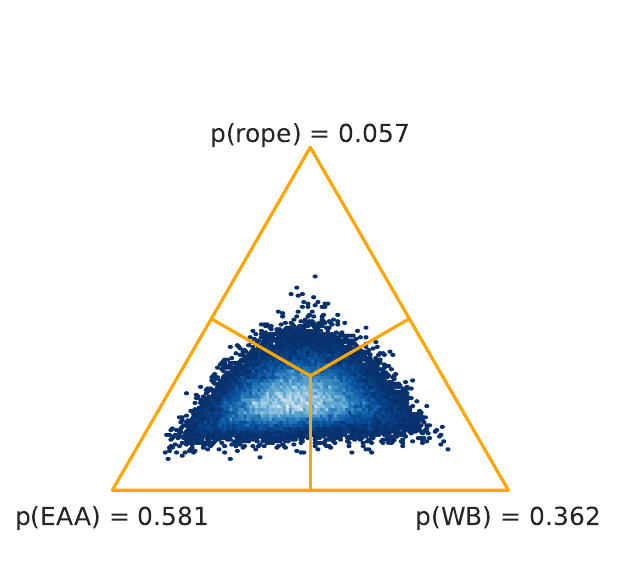}\\
        (a) EAA vs WB
    \end{minipage}
    \hfill
    \begin{minipage}[t]{0.18\linewidth}
        \centering
        \includegraphics[width=0.98\linewidth]{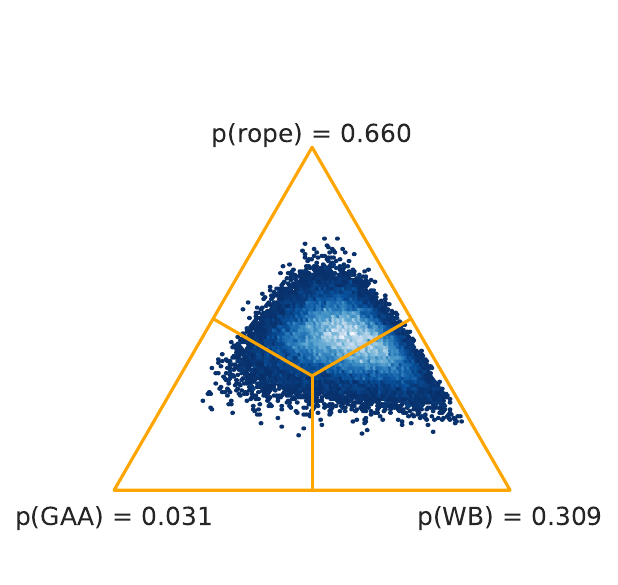}\\
        (b) GAA vs WB
    \end{minipage}
    \hfill
    \begin{minipage}[t]{0.18\linewidth}
        \centering
        \includegraphics[width=0.98\linewidth]{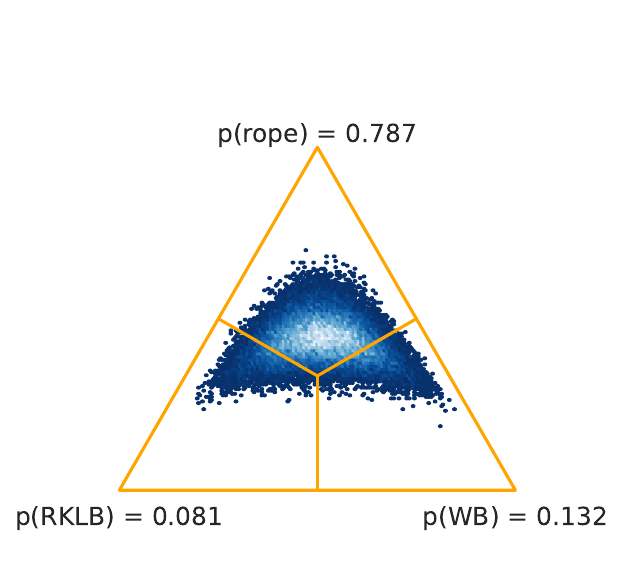}\\
        (c) RKLB vs WB
    \end{minipage}
    \hfill
    \begin{minipage}[t]{0.18\linewidth}
        \centering
        \includegraphics[width=0.98\linewidth]{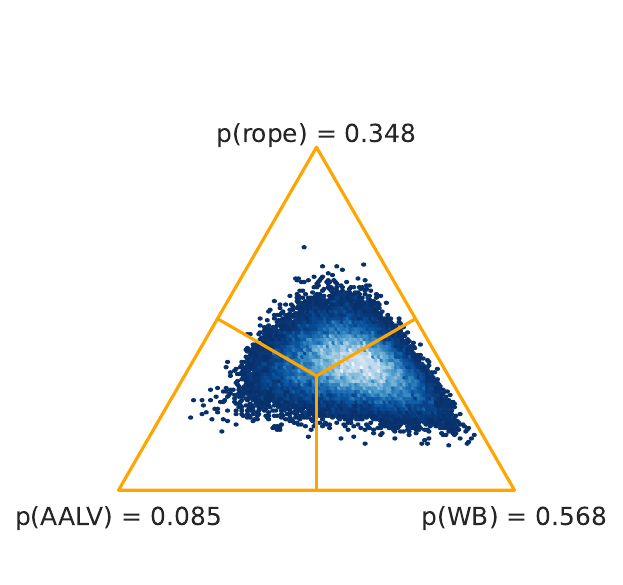}\\
        (d) AALV vs WB
    \end{minipage}
    \hfill
    \begin{minipage}[t]{0.18\linewidth}
        \centering
        \includegraphics[width=0.98\linewidth]{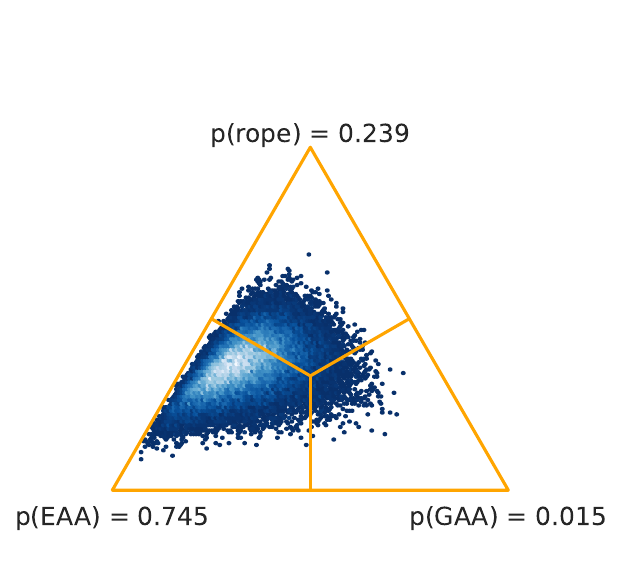}\\
        (e) EAA vs GAA
    \end{minipage}

    \begin{minipage}[t]{0.18\linewidth}
        \centering
        \includegraphics[width=0.98\linewidth]{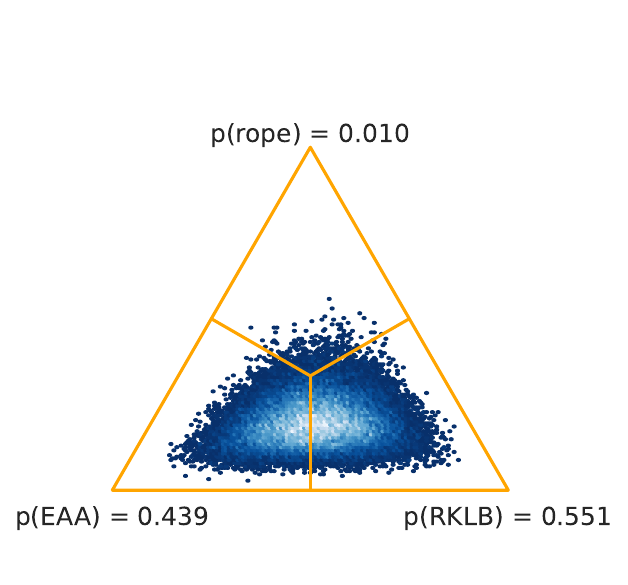}\\
        (f) EAA vs RKLB
    \end{minipage}
    \hfill
    \begin{minipage}[t]{0.18\linewidth}
        \centering
        \includegraphics[width=0.98\linewidth]{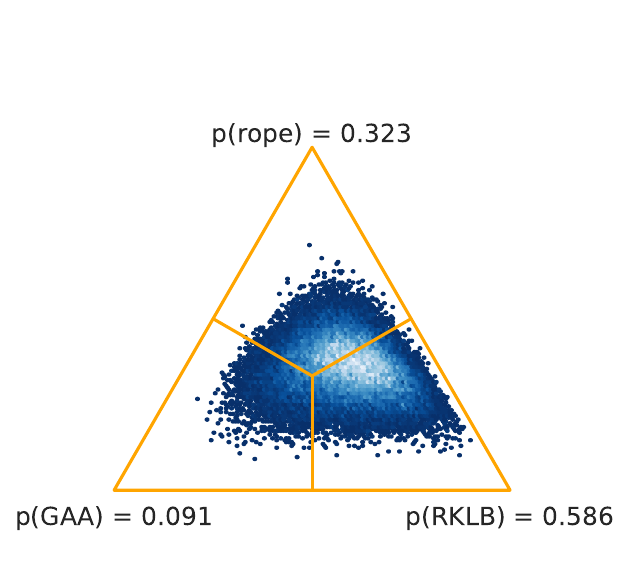}\\
        (g) GAA vs RKLB
    \end{minipage}
    \hfill
    \begin{minipage}[t]{0.18\linewidth}
        \centering
        \includegraphics[width=0.98\linewidth]{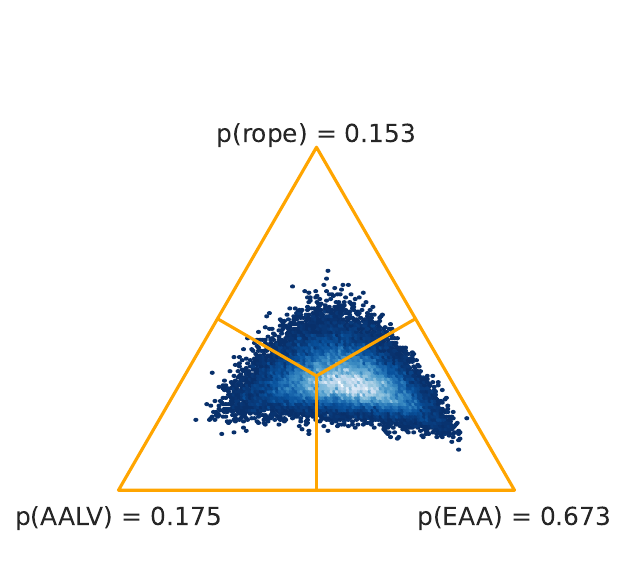}\\
        (h) AALV vs EAA
    \end{minipage}
    \hfill
    \begin{minipage}[t]{0.18\linewidth}
        \centering
        \includegraphics[width=0.98\linewidth]{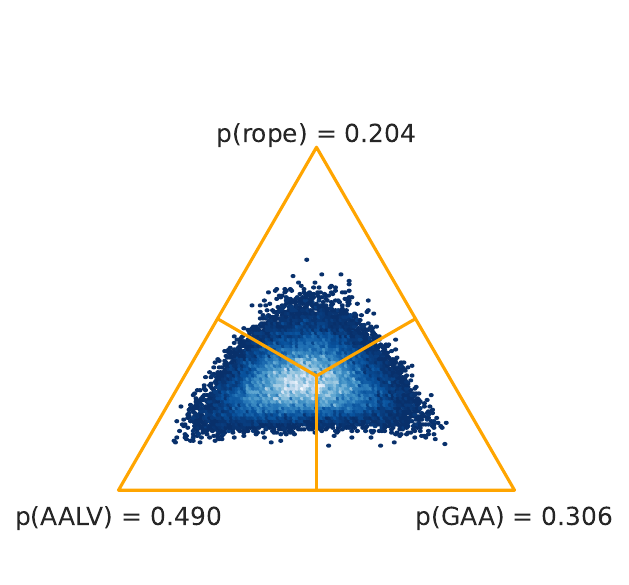}\\
        (i) AALV vs GAA
    \end{minipage}
    \hfill
    \begin{minipage}[t]{0.18\linewidth}
        \centering
        \includegraphics[width=0.98\linewidth]{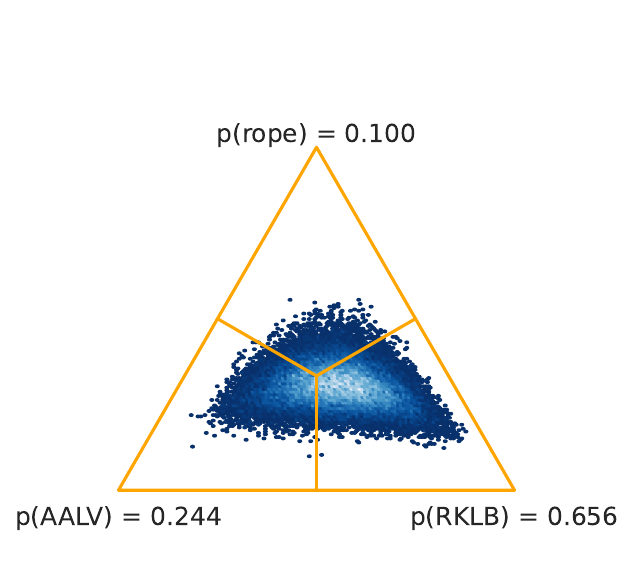}\\
        (j) AALV vs RKLB
    \end{minipage}

    \caption{Bayesian signed-rank test triangle plots comparing aggregation methods on the \textbf{NLL} metric. Each subfigure shows the posterior distribution over the relative performance between two methods.}
    \label{fig:bayes-tests-nll}
\end{figure*}

To assess whether the performance differences between aggregation methods are statistically significant, we employ the Bayesian signed-rank test as described in \cite{carrasco2017rnpbst}. For each pair of aggregation methods, we compare their performance across all shared evaluation points, i.e., for each combination of dataset, number of Bayesian layers, and random seed, ensuring a paired analysis under identical experimental conditions. The Bayesian test computes the posterior probabilities that one method outperforms, underperforms, or performs similarly to the other, where similarity is defined within a Region of Practical Equivalence (ROPE). Rather than relying on a fixed ROPE, we adopt a data-driven approach, i.e., for each comparison, the ROPE is set to the 25th percentile of the absolute differences between the two methods. This threshold serves as a conservative estimate of what constitutes a practically negligible performance. When methods behave similarly, most observed differences fall within this region, otherwise, clear performance gaps emerge beyond it. Figure~\ref{fig:bayes-tests-nll} presents the results based on the NLL metric, which is particularly informative as it captures both predictive accuracy and the quality of uncertainty estimation. As shown, the comparisons reveal no statistically significant advantage for any method, indicating broadly comparable performance across all evaluated scenarios.

\begin{figure}[ht]
    \centering
    \includegraphics[width=0.9\linewidth]{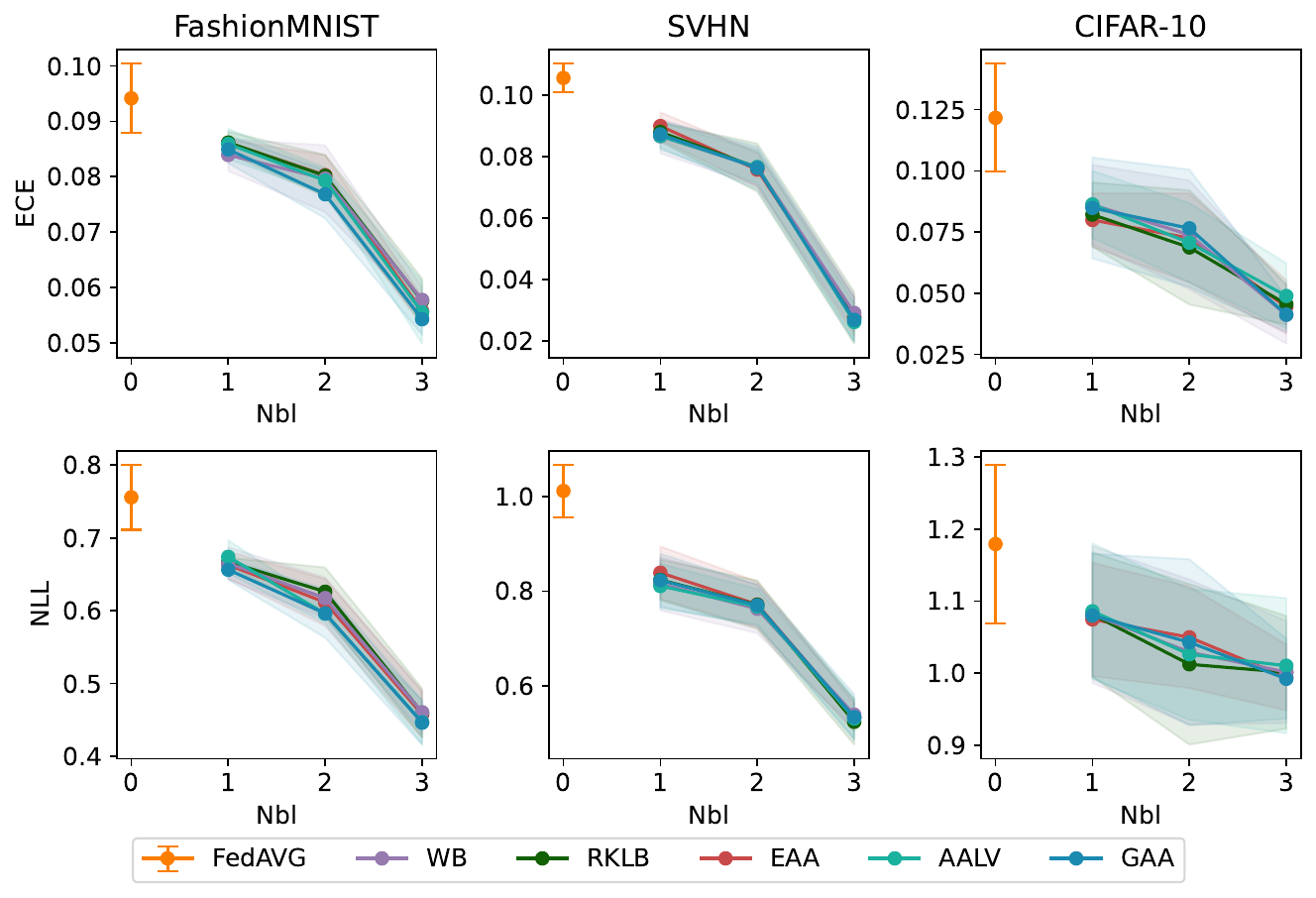}
    \caption{Effect of Bayesian Layers on Uncertainty Quantification and Model Calibration.}
    \label{fig:uq_vs_nbl}
\end{figure}

Focusing on the reliability of BFL, we observe that, regardless of the aggregation technique used, the trends reported in Fig. \ref{fig:uq_vs_nbl} indicate that \emph{increasing the number of Bayesian layers in local models improves both global model calibration and uncertainty quantification while reducing ECE and NLL scores.} However, this added Bayesian complexity often comes at the expense of reduced time efficiency. As the number of Bayesian layers increases, computational demand rises, leading to longer processing times per communication round. This trade-off between improved model reliability and increased computational cost must be carefully considered in practical applications.

Finally, in Table \ref{tab:sota}, we compare the performances of BA-BFL against two state-of-the-art parametric client-side BFL methods, pFedBayes \cite{zhang2022personalized} and pFedVem \cite{zhu2023confidence}. For the proposed method, we consider the setting with RKLB aggregation and three Bayesian layers, as it provides the best performance tradeoff. On the other hand, for pFedBayes and pFedVem, we consider the configurations detailed in their respective original papers. The results show comparable performance across all three methodologies. Our proposed approach achieves relatively better accuracy and NLL on the SVHN and CIFAR-10 datasets. In contrast, pFedVem and pFedBayes yield the best accuracy and NLL, respectively, on FashionMNIST.

\begin{table}[h]

\begin{subtable}[t]{\linewidth}
    \centering
    \begin{tabular}{lccc}
    \toprule
    \textbf{Method} & \textbf{FashionMNIST} & \textbf{SVHN} & \textbf{CIFAR-10} \\
    \midrule
    pFedVem  & \textbf{89.50 $\pm$ 0.23} & 86.32 $\pm$ 0.22 & 60.88 $\pm$ 1.44 \\
    pFedBayes  & 88.02 $\pm$ 0.39 & 86.03 $\pm$ 0.41 & 63.86  $\pm$ 1.58 \\
    Ours  & 87.77 $\pms$ 0.80 & \textbf{86.53 $\pms$ 1.03} & \textbf{64.55 $\pms$ 2.97}
    \end{tabular}
    \caption{Comparison in Accuracy}
    \label{tab:sota_acc}
\end{subtable}

\vspace{0.5cm}

\begin{subtable}[t]{\linewidth}
    \centering
    \begin{tabular}{lccc}
    \toprule
    \textbf{Method} & \textbf{FashionMNIST} & \textbf{SVHN} & \textbf{CIFAR-10} \\
    \midrule
    pFedVem  & 0.45 $\pm$ 0.02 & 0.80 $\pm$ 0.03 & 2.44 $\pm$ 0.10 \\
    pFedBayes  & \textbf{0.34 $\pm$ 0.02} & 0.66 $\pm$ 0.04 & 1.25 $\pm$ 0.06 \\
    Ours  & 0.46 $\pms$ 0.03 &  \textbf{0.52 $\pms$ 0.05}  & \textbf{1.00 $\pms$ 0.08} \\ 
    \end{tabular}
    \caption{Comparison in NLL}
    \label{tab:sota_nll}
\end{subtable}
    \caption{Comparison to state-of-the-art methods}
    \label{tab:sota}
\end{table}

\section{Conclusions and Future Work}
In this paper, we introduced BA-BFL, a novel geometric interpretation of barycenters as a solution to the BFL aggregation problem. This approach provides an explainable aggregation method. The information-geometric view of the aggregation step naturally enables operations such as clustering local posteriors directly on the statistical manifold, which has potential applications in hierarchical FL. Building on this concept, we recovered two aggregation techniques based on analytical results for Gaussian barycenters using two widely used divergences: the squared Wasserstein-2 distance and the reverse KL divergence.
We demonstrated that BA-BFL retains the convergence properties of FedAvg for non-convex loss functions and performs robustly in both homogeneous and heterogeneous data scenarios. We experimentally evaluated the proposed methods in heterogeneous settings, showing improvements over state-of-the-art methods. We also examined the impact of varying the number of Bayesian layers in an HBDL context, evaluating their effects on accuracy, uncertainty quantification, model calibration, and cost-effectiveness.
For future work, we envision several extensions, including expanding the family of distributions to include non-parametric ones and exploring alternative divergence measures. We also plan to address the personalization problem within the barycentric aggregation framework for BFL.

\section*{Acknowledgments}
\small 
The work was supported by the European Research Council (ERC) under the European Union’s Horizon 2020  Research and Innovation programme (Grant agreement No.101003431), the IMT "Futur, Ruptures \& Impacts" program, and the European Commission through the Horizon Europe/JU SNS project, ROBUST-6G (Grant Agreement No. 101139068).

\bibliographystyle{IEEEtran}
\bibliography{ref_capitalized}

\end{document}